\def\BibTeX{{\rm B\kern-.05em{\sc i\kern-.025em b}\kern-.08em
    T\kern-.1667em\lower.7ex\hbox{E}\kern-.125emX}}
\pgfplotsset{compat=1.12}
\newtheorem{theorem}{Theorem}
\newtheorem{lemma}[theorem]{Lemma}
\newtheorem{remark}[]{Remark}
\newcommand{\Ocal}{\mathcal{O}}
\newcommand{\Rbb}{\mathbb{R}}
\newcommand{\Qbb}{\mathbb{Q}}
\newcommand{\Zbb}{\mathbb{Z}}
\newcommand{\Nbb}{\mathbb{N}}
\newcommand{\BNN}{\text{\scriptsize BNN}}  
\DeclareMathOperator{\diam}{diam}
\newcommand{\eps}{\varepsilon}
\newcommand{\norm}[1]{\left\lVert#1\right\rVert}
\definecolor{TUgreen}{HTML}{649600}
\newdimen\XCoord
\newdimen\YCoord
\newcommand*{\ExtractCoordinate}[1]{\path (#1); \pgfgetlastxy{\XCoord}{\YCoord};}%
\begin{document}

\title{Universal Approximation Theorems of Fully Connected Binarized Neural Networks}
\author{Mikail Yayla, Mario G\"unzel, Burim Ramosaj, and Jian-Jia Chen\thanks{ mikail.yayla@tu-dortmund.de, mario.guenzel@tu-dortmund.de, burim.ramosaj@tu-dortmund.de, jian-jia.chen@tu-dortmund.de}\\
TU Dortmund University, Dortmund, Germany
}

\maketitle

\begin{abstract}
Neural networks (NNs) are known for their high predictive accuracy in complex learning problems. 
Beside practical advantages, NNs also indicate favourable theoretical properties such as universal approximation (UA) theorems. 
Binarized Neural Networks (BNNs) significantly reduce time and memory demands by restricting the weight and activation domains to two values. Despite the practical advantages, theoretical guarantees based on UA theorems of BNNs are rather sparse in the literature. We close this gap by providing UA theorems for fully connected BNNs under the following scenarios: (1) for binarized inputs, UA can be constructively achieved under one hidden layer; (2) for inputs with real numbers, UA can not be achieved under one hidden layer but can be constructively achieved under two hidden layers for Lipschitz-continuous functions. Our results indicate that fully connected BNNs can approximate functions universally, under certain conditions. 
\end{abstract}

\section{Introduction}
\label{sec:intro}

In several applications such as image or voice recognition, Neural Networks (NN) are key models for training a machine in supervised learning problems, see e.g. \cite{krizhevsky2012imagenet}, \cite{simonyan2014very}, \cite{szegedy2015going} or \cite{rajkomar2018scalable}. The usage of these learners has experienced an increasing trend due to their high predictive accuracy in complex learning problems, while computational capacities have increased making their usage applicable. 

However, the high accuracy of NNs is achieved by deep structures with many layers and a massive number of parameters. This leads to bottlenecks in data movement and multiply accumulate (MAC) operations in general-purpose computing systems. Many studies have therefore focused on methods to improve the energy and inference latency of NNs. Algorithmic approaches employ pruning of weights, neurons, or larger structures to achieve efficient NN models~\cite{choudhary/etal/2020}. On the hardware side, the design of various specialized devices to accelerate NN operations for energy efficiency and fast application has been explored~\cite{yiranchen/etal/2020}.

Another approach for reducing computational demands is quantization resp.\@ binarization in NNs. This results in the restriction of weights in the network to two or three possible outcomes usually lying in the range of $\{\pm 1\} $, see e.g.~\cite{hubara/etal/2016},~\cite{hirtzlin/etal/2019stoch} or~\cite{sari/etal/2019}. The benefits are multifold: while time complexity is significantly reduced through the simpler network structures, space complexity is also kept at a comparably low level. High predictive accuracy of binarized NNs in regression or classification and online training have been experimented in~\cite{hodge2014short} or~\cite{yu2016binary}, for example. 

While most of the results in the literature for quantized NNs have been focused on experimental research in empirical and simulation based analyzes, theoretical guarantees such as universal approximation (UA) theorems are rather sparse. In~\cite{ding/etal/2019} or~\cite{spallanzani/etal/2019}, for example, UA theorems for quantized NNs have been established under assumptions such as Lipschitz-continuity of the approximating function, while not fully connected networks have been considered. Therefore, the choice of the corresponding network structure remains mostly unclear. 
We aim to close this gap by providing minimal layer sizes required for attaining UA theorems in fully connected binarized NNs (FC-BNNs) instead. 

\textbf{Our contributions} are the fundamental results of existence or non-existence of UA theorems for FC-BNNs
under the following scenarios: 
\begin{itemize}
\item We first start with FC-BNNs with binary inputs and potentially real-valued function image. In Section \ref{sec:1_fc_bin}, we establish an UA theorem for this scenario. 

\item Extending the feature domain to real-values while focusing on single hidden layer FC-BNNs, we construct a counterexample indicating the violation of UA-like theorems in Section \ref{sec:1_fc_real}. Therefore, we can show that a single hidden layer FC-BNN is not capable of universally approximating functions in case of non-binary inputs.  In order to obtain an UA theorem for this case, the inclusion of an extra hidden layer is required. Hence, in Section \ref{sec:2_fc_real}, we establish the UA theorem for FC-BNNs with two hidden layers and real-valued inputs. 

\item While the weights directing to the output layer of a quantized or binarized NN are real values, we show in Section~\ref{sec:removal-real-weights} that the restriction to quantized or binarized weight values for the output layers do not have severe effects. Hence, we theoretically guarantee that any quantized or binarized NN with real valued output weights can be transformed into a NN with fully quantized resp.\@ binarized weights keeping output accuracy at a high level. 
\end{itemize}

Our results indicate that BNNs show beside efficient time and space complexity properties also UA properties with minimal layer-size requirements.
In addition to the above technical contributions from Sections~\ref{sec:1_fc_bin} to~\ref{sec:removal-real-weights}, Section~\ref{sec:relatedwork} provides a short summary of the related work, Section~\ref{sec:model} defines the system model studied in this paper, and Section~\ref{sec:conclusion} concludes this paper.

\section{Related Work}\label{sec:relatedwork}

Recent research on quantized NNs (QNNs) and BNNs have been focusing on two main fields: (i) algorithmic implementation of these networks based on variants of the Backpropagation algorithm in (stochastic) decent algorithms and  (ii) theoretical guarantees regarding algorithmic convergence and universal approximations. Regarding the first point, the work of \cite{hubara2017quantized}, for example, delivers methodologies how to compress neural network structures in order to train QNNs with weights of extremely low precision (1 bit storage). Therein, the AlexNet could be quantized while almost maintaining predictive accuracy. In \cite{choi2016towards}, designs for network structures are proposed that minimize performance loss. The latter has been quantized using Hessian weighted distortion measures. Theoretical results in this direction have been analyzed in \cite{li2017training}, where several quantization methods in Convolutional NNs using the gradient descent method have been investigated for convergence purposes

The first UA theorems for NNs date back to the work of \cite{cybenko1989approximation} using sigmoid activation functions and an arbitrary layer width. Modifications and extensions have been established in \cite{hornik1991approximation} or  \cite{barron1994approximation}, for example. UA properties to quantized NNs have been considered in \cite{spallanzani/etal/2019}, \cite{ding/etal/2018} and \cite{wang/etal/2018}, for example. The first work focused on the uniform approximation capabilities of QNNs for Lipschitz-continuous functions. Furthermore, the authors reveal that the same set of functions that can be approximated by a Deep NN can also be approximated by QNNs. The results are mainly based on not fully connected QNNs and leave room for scenarios, in which the layers of a QNN are fully connected. 
We aim to close this gap by considering FC-BNNs with moderate numbers of layers required to obtain UA properties, while also extending the work to potential binarized inputs as well. 

Differently to that, the authors in \cite{wang/etal/2018} focused on stochastic-computing based NNs and show UA properties in almost sure sense. The work of \cite{ding/etal/2018} considers UA properties of quantized ReLU networks for locally integratable functions on the Sobolev space. Furthermore, they provide tight upper bounds on the number of weights and the memory size of quantized ReLU networks. While the theoretical results reveal UA properties for either a given number of weights in the network or a given weight maximum precision, fully connected, minimal layer properties for obtaining UA theorems are not directly deducible. Therefore, we aim to close this gap by providing minimal layer requirements while focusing on FC-BNNs, for which we derive UA properties. We focus on fully connected networks, since this is the intuitively most trivial network structure when constructing moderate layer-sized BNNs. 

\section{System Model and Preliminary Results}
\label{sec:model}

\begin{figure}
    \centering
    \newcommand{\circlediam}{0.3cm}
    \begin{tikzpicture}[xscale=0.85]
        \node[minimum size=0.5cm,draw,circle] (i1) at (-3,3) {$x_1$};
        \node[minimum size=0.5cm,draw,circle] (i10) at (-3,1) {$x_2$};
        \node[minimum size=0.5cm,draw,circle] (i2) at (-3,-1) {$x_d$};
        
        \node[minimum size=1cm,draw,circle] (h3) at (0,3) {$o^1_1$};
        \node[minimum size=1cm,draw,circle] (h2) at (0,1) {$o^1_2$};
        \node[minimum size=1cm,draw,circle] (h5) at (0,-1) {$o^1_{n_1}$};
        
        \node[anchor=north, text width=1.5cm] at (-2.25,-1.5) {input layer\\($\ell = 0$)};
        \node[anchor=north, text width=2cm] at (0.75,-1.5) {hidden layer\\($\ell = 1$)};
        \node[anchor=north, text width=2cm] at (3.75,-1.5) {output layer\\($\ell = 2$)};
        
        
        \node at ($(i10)!.5!(i2)$) {\vdots};
        \node at ($(h2)!.5!(h5)$) {\vdots};
        
        \coordinate (D) at ($(h3)!.5!(h5)$);
        \ExtractCoordinate{D};
        \node[minimum width=1cm,draw,circle] (o1) at (3,\YCoord) {$\sum$};
        
        \node[] (out) at (5,\YCoord) {$\mathcal{O}^{h=1}_{\BNN}(\mathbf{x})$};
        
        
        \draw[->] (i1) -- (h2);
        \draw[->] (i2) -- (h2);
        \draw[->] (i1) -- node[above] {$w^1_{i,j}$} (h3);
        \draw[->] (i2) -- (h3);
        \draw[->] (i1) -- (h5);
        \draw[->] (i2) -- (h5);
        
        \draw[->] (i10) -- (h3);
        \draw[->] (i10) -- (h2);
        \draw[->] (i10) -- (h5);
        
        \draw[->] (h2) -- (o1);
        \draw[->] (h3) -- node[above] {$w^2_{j}$} (o1);
        \draw[->] (h5) -- (o1);
        
        \draw[->] (o1) -- (out);
    \end{tikzpicture}
    \caption{Single hidden layer BNN-Model}
    \label{fig:bnn_model}
\end{figure}
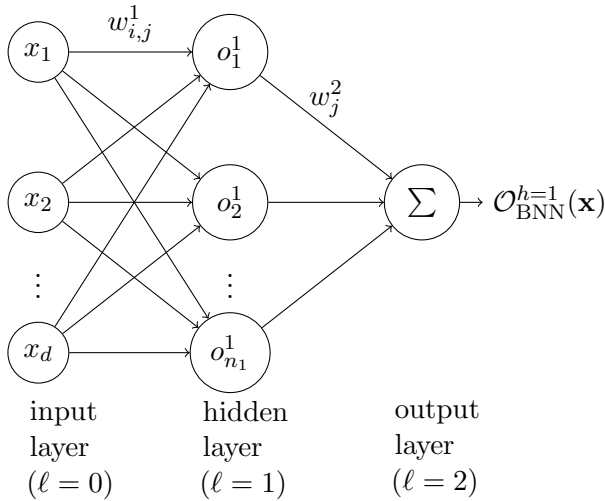

\begin{figure}
    \centering
    \newcommand{\circlediam}{0.3cm}
    \scalebox{0.85}{\begin{tikzpicture}[xscale=0.85]
        \node[minimum size=0.5cm,draw,circle] (i1) at (-3,3) {$x_1$};
        \node[minimum size=0.5cm,draw,circle] (i10) at (-3,1) {$x_2$};
        \node[minimum size=0.5cm,draw,circle] (i2) at (-3,-1) {$x_d$};
        
        \node[minimum size=1cm,draw,circle] (h3) at (0,3) {$o^1_1$};
        \node[minimum size=1cm,draw,circle] (h2) at (0,1) {$o^1_2$};
        \node[minimum size=1cm,draw,circle] (h5) at (0,-1) {$o^1_{n_1}$};
        
        \node[minimum size=1cm,draw,circle] (h23) at (3,3) {$o^2_1$};
        \node[minimum size=1cm,draw,circle] (h22) at (3,1) {$o^2_2$};
        \node[minimum size=1cm,draw,circle] (h25) at (3,-1) {$o^2_{n_2}$};
        
        \node at ($(i10)!.5!(i2)$) {\vdots};
        \node at ($(h2)!.5!(h5)$) {\vdots};
        \node at ($(h22)!.5!(h25)$) {\vdots};
        
        \coordinate (D) at ($(h3)!.5!(h5)$);
        \ExtractCoordinate{D};
        \node[minimum width=1cm,draw,circle] (o1) at (5,\YCoord) {$\sum$};
        
        \node[] (out) at (7,\YCoord) {$\mathcal{O}^{h=2}_{\BNN}(\mathbf{x})$};
        
        
        \draw[->] (i1) -- (h2);
        \draw[->] (i2) -- (h2);
        \draw[->] (i1) -- node[above] {$w^1_{i,j}$} (h3);
        \draw[->] (i2) -- (h3);
        \draw[->] (i1) -- (h5);
        \draw[->] (i2) -- (h5);
        
        \draw[->] (i10) -- (h3);
        \draw[->] (i10) -- (h2);
        \draw[->] (i10) -- (h5);
        
        \draw[->] (h3) -- node[above] {$w^2_{j,k}$} (h23);
        \draw[->] (h2) -- (h22);
        \draw[->] (h5) -- (h25);
        
        \draw[->] (h3) -- (h25);
        \draw[->] (h2) -- (h23);
        \draw[->] (h5) -- (h22);
        
        \draw[->] (h3) -- (h22);
        \draw[->] (h2) -- (h25);
        \draw[->] (h5) -- (h23);
        
        \draw[->] (h22) -- (o1);
        \draw[->] (h23) -- node[above] {$w^3_{k}$} (o1);
        \draw[->] (h25) -- (o1);
        
        \draw[->] (o1) -- (out);
        
        \node[anchor=north, text width=1.5cm] at (-2.5,-1.5) {input layer\\($\ell = 0$)};
        \node[anchor=north, text width=2cm] at (0.75,-1.5) {hidden layer\\($\ell = 1$)};
        \node[anchor=north, text width=2cm] at (3.75,-1.5) {hidden layer\\($\ell = 2$)};
        
        \node[anchor=north, text width=2cm] at (6.75,-1.5) {output layer\\($\ell = 3$)};
        
    \end{tikzpicture}}
    \caption{Two hidden layers BNN-Model.}
    \label{fig:bnn_model2}
\end{figure}
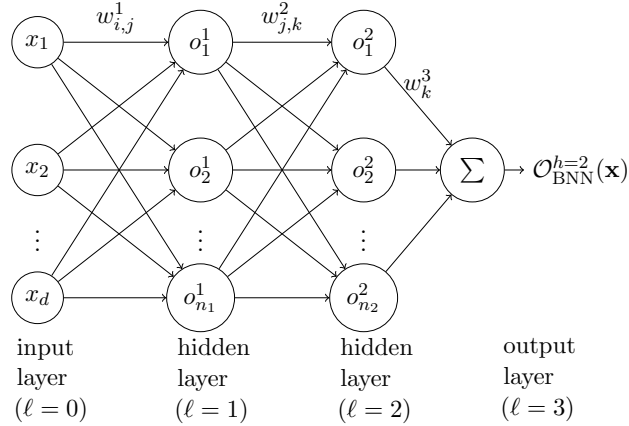

\subsection{System Model and NN Structures}
We consider fully connected binarized neural networks (BNNs) as depicted in Figure~\ref{fig:bnn_model} with one hidden layer (${h=1}$), denoted as $\Ocal^{h=1}_{\BNN}$, and Figure~\ref{fig:bnn_model2} with two hidden layers (${h=2}$), denoted as $\Ocal^{h=2}_{\BNN}$.

Throughout the paper, we denote vectors and matrices in \textbf{bold}, while scalars are printed in usual format. Furthermore, superscripts in $\phi, w, b$ and $o$ representing different parts of a network have not be understood in mathematical exponent sense, but rather as index values referring to different parts of the network.
The superscript $()^\top$ denotes the transposed vector/matrix.

Let $\boldsymbol{x} = (x_1, \dots, x_d)^\top \in X \subseteq \mathbb{R}^{d}$ be the input to the BNN, where $x_i$ is the $i$-th input value of $\boldsymbol{x}$, with $1 \le i \le d$. We consider the scenarios where $x_i$ is either a \emph{real} number in $\Rbb$ or a \emph{binary} number in $\{\pm1\}$ for all $i \in \{1, \dots, d\}$ as specified in each section. The input $\boldsymbol{x}$, which we denote as $\boldsymbol{\phi}^{0}$ in the following, is fed into the first hidden layer, i.e., the one with layer index $\ell=1$, of the BNN. This layer subsequently computes its output, i.e., the output of the first hidden layer, denoted as $\boldsymbol{\phi}^{1}$.

Furthermore, denote by $L \in \mathbb{N}$ the total number of layers, such that the network consists of $L-1$ hidden layers and one output layer. Each layer consists of $n_\ell \in \mathbb{N}$ neurons. More specifically, the $i$-th neuron in the  \mbox{$\ell$-th} hidden layer, with $1\leq i \leq n_{\ell}$ and $1\leq\ell\leq L-1$, has input $\boldsymbol{\phi}^{\ell-1} =  (\phi_{1}^{\ell-1}, \dots, \phi_{n_{\ell-1}}^{\ell-1})^\top$ and output $o_i^\ell(\boldsymbol{\phi}^{\ell-1})$, which is used as component $\phi_{i}^{\ell-1}$ in the input $\boldsymbol{\phi}^{\ell}$ 
for the neurons of the subsequent layer.
The output of the last layer is denoted by $out(\boldsymbol{\phi}^{L-1})$.
For the hidden layers, the contribution of the neuron $i$ in the $(\ell-1)$-th layer to neuron $j$ in the $\ell$-th hidden layer is evaluated using binarized weight $w^\ell_{i,j} \in \{\pm1\}$. By iterating over weights and inputs for the convolution, the pre-activation value $s^\ell_j(\boldsymbol{\phi}^{\ell-1})$ of the neuron $j$ in the $\ell$-th layer is  $s^\ell_j(\boldsymbol{\phi}^{\ell-1}) = \sum\limits_{t=1}^{n_{\ell-1}} w^\ell_{t,j} \phi^{\ell-1}_{t}$.
The neurons compute their output by $o^{\ell}(\boldsymbol{\phi}^{\ell-1}) := \sigma_{b^\ell_j} (s^\ell_j(\boldsymbol{\phi}^{\ell-1}))$, with the activation function
\begin{equation}\label{eq:activation_fct}
	\sigma_{b^\ell_j}: \mathbb{R} \to \{\pm1\},~t \mapsto \begin{cases}
	1 & ,\,t > b^\ell_j \\
	-1 & ,\,\text{else}
	\end{cases}
\end{equation}
where 
$b^\ell_j \in \mathbb{R}$ is a threshold specifying parameter, which can be calculated from batch normalization parameters (see~\cite{sari/etal/2019} for details on batch normalization in BNNs).
When $\ell=L$, i.e., when we consider the output layer, the final outcome of the network is given by $out(\boldsymbol{\phi}^{L-1})$ and is computed by $\sum\limits_{i=1}^{n_{L-1}} w^{L}_{i} \phi^{L-1}_{i}$ with real values $w^{L}_{i} \in \mathbb{R}$. 
We note that the weights of the output layer are real values to allow the BNN to output real values; otherwise the BNN would be limited to output integer values. The weights of all other layers are restricted to binary values only. NN with multiple outcomes such as in multivariate response regression do consider multiple output neurons in layer $L$. In this work, we consider NNs with one output neuron.

For the BNNs in Figures~\ref{fig:bnn_model} and~\ref{fig:bnn_model2}, the output of a single hidden layer BNN at input $\boldsymbol{x}$ is defined by $\Ocal^{h=1}_{BNN}(\boldsymbol{x}) := out ( o^1 (\boldsymbol{x}) )$ and the output of a binarized neural network at $\boldsymbol{x}$ with two hidden layers is computed by $\Ocal^{h=2}_{BNN}(\boldsymbol{x}) := out ( o^2(o^1 (\boldsymbol{x}) ))$.

In this work, for simplicity of presentation, we consider binarization of weights and activations, i.e., $w_{i,j}^\ell$ and each component of $\boldsymbol{\phi}^{\ell}$ after applying the activation function in $\eqref{eq:activation_fct}$ are in $\{\pm1\}$ for $1 \leq \ell \leq L-1$ as presented above. In practice, the binarization to $\{0,1\}$ is used, for more efficient execution of the multiplication and accumulation (MAC) operation of neurons. In this case, the multiplication, summation and activation can be computed with
\begin{equation}\label{MACoperator}
    2*{popcount}(XNOR(\boldsymbol{W}^{\ell-1}_j, \boldsymbol{\phi}^{\ell-1})) - n_{\ell-1} > b^{\ell}_j,
\end{equation}
where $\boldsymbol{W}^{\ell-1}_j = (w^{\ell-1}_{1,j}, \dots, w^{\ell-1}_{n_{\ell-1},j})^\top \in \{0,1\}^{n_{\ell-1}}$ denotes the $n_{\ell - 1}$-tuple of the incoming weights to the $j$-th neuron in the $\ell$-th layer, $\boldsymbol{\phi}^{\ell-1} \in \{0,1\}^{n_{\ell-1}}$ the binarized input to neuron $j$ located at layer $\ell$, $popcount$ counts the number of ones (hamming weight), $n_{\ell-1}$ is the number of bits in the XNOR operands, and $b^\ell_j$ is the threshold specifying parameter. The result of this comparison is a binary value~\cite{hubara/etal/2016},~\cite{sari/etal/2019}. Therefore, the operations such as summation, multiplication in $s^\ell_j(\boldsymbol{\phi}^{\ell-1})$ and the usage of the activation function in $\eqref{eq:activation_fct}$ have to be understood 
in MAC operation sense as introduced in $\eqref{MACoperator}$. 
The usage of the proposed operations in $\eqref{MACoperator}$ is justifiable due to the properties of the binary XNOR operation under binarized inputs to the neurons. If the inputs are binarized (as explored in~\cite{hirtzlin/etal/2019stoch}), then the MAC operations of the first layer can also be computed with $\eqref{MACoperator}$. If the input is in $\Rbb$, then the usual MAC operations are performed.

\subsection{Preliminary Results of Universal Approximation}\label{sec:uaptheorems}

The development of UA properties of neural networks date back to the work of \cite{cybenko1989approximation} and were one of the first theoretical breakthroughs for neural networks. In the sequel, we formulate the UA theorem for neural networks with weights in $\mathbb{R}$ and \textit{sigmoid}-type activation functions, which are non-decreasing functions with the property that $\sigma(x) \rightarrow -1$, as $x \downarrow - \infty$ and $\sigma(x) \rightarrow 1$, as $x \uparrow \infty$. Note that the activation function in \eqref{eq:activation_fct} considered in our paper is of sigmoid-type. The upcoming theorem is leaned on \cite{devroye2013probabilistic}, page 519. We first focus on neural networks with $h = 1$ hidden layers with inputs lying on a hyper-rectangle and sigmoidal-type activation functions. Recalling Theorem $2$ in \cite{hornik1991approximation}, one can relax the sigmoidal-type condition and instead assume that the activation function is bounded, non-constant and continuous, while the function support is compact. Even stronger results regarding the class of activation functions leading to UA properties of a neural network can be found in \cite{leshno1993multilayer}. Therein, all non-polynomial activations can lead to UA properties of neural networks. Both works indicate the importance of the network structure rather than the choice of the activation function. Note that function denseness in \cite{hornik1991approximation} is defined on $L^p$-space for $1 \le p < \infty$ and is slightly different to our deterministic function approximation capabilities.

\begin{theorem}[Universal Approximation Theorem for NN]
    For every continuous function $f: [a,b]^d \rightarrow \Rbb$ and for every $\epsilon > 0$, there exists a neural network with one hidden layer and sigmoid-type activation function denoted as $\Ocal_{N}^{h = 1}$ with the usual MAC operations such that 
    \begin{equation}\label{eq:ua_thm1}
        \sup_{\boldsymbol{x} \in [a,b]^d} \left|{\mathcal{O}_{N}^{h = 1}(\boldsymbol{x}) - f(\boldsymbol{x})} \right|< \eps
    \end{equation}{}
    holds.
\end{theorem}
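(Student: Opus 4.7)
The plan is to invoke the classical functional-analytic argument of Cybenko, adapted to our sigmoid-type activation on the compact hyper-rectangle $[a,b]^d$. Let $\mathcal{S} \subseteq C([a,b]^d, \Rbb)$ denote the linear span of all functions of the form $\boldsymbol{x} \mapsto \sigma(\boldsymbol{w}^\top \boldsymbol{x} + b)$ with $\boldsymbol{w} \in \Rbb^d$ and $b \in \Rbb$, equipped with the supremum norm. Since the output $\Ocal_{N}^{h=1}(\boldsymbol{x})$ of a single-hidden-layer NN with real output weights and a sigmoid-type activation is exactly a finite linear combination of such ridge functions, proving~\eqref{eq:ua_thm1} reduces to establishing that $\mathcal{S}$ is dense in $C([a,b]^d, \Rbb)$.

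First, I would argue by contradiction. If $\overline{\mathcal{S}}$ were a proper closed subspace, the Hahn--Banach theorem would yield a non-zero continuous linear functional $L$ on $C([a,b]^d, \Rbb)$ vanishing on $\mathcal{S}$. By the Riesz representation theorem, $L$ is represented by a non-zero finite signed regular Borel measure $\mu$ on $[a,b]^d$ satisfying
\[
    \int_{[a,b]^d} \sigma(\boldsymbol{w}^\top \boldsymbol{x} + b)\, d\mu(\boldsymbol{x}) = 0
\]
for every $\boldsymbol{w} \in \Rbb^d$ and every $b \in \Rbb$. Deriving a contradiction from this identity is the heart of the argument.

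The core technical step is the so-called discriminatory property: one shows that the orthogonality above forces $\mu \equiv 0$. The key observation is that, for fixed direction $\boldsymbol{v} \in \Rbb^d$ and fixed shift $c \in \Rbb$, the rescaled family $\sigma(\lambda(\boldsymbol{v}^\top \boldsymbol{x} - c))$ is uniformly bounded in $\lambda > 0$ and converges pointwise as $\lambda \to \infty$ to the indicator of the open half-space $\{\boldsymbol{v}^\top \boldsymbol{x} > c\}$ (outside the hyperplane $\{\boldsymbol{v}^\top \boldsymbol{x} = c\}$). Bounded convergence then forces $\mu(\{\boldsymbol{v}^\top \boldsymbol{x} > c\}) = 0$ for sufficiently many pairs $(\boldsymbol{v}, c)$, and a standard Dynkin/monotone-class argument---or equivalently a Fourier-analytic argument on $\Rbb^d$---extends this to all Borel sets, yielding $\mu \equiv 0$ and the desired contradiction. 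Density of $\mathcal{S}$ in $C([a,b]^d, \Rbb)$ then directly produces the finite linear combination realising~\eqref{eq:ua_thm1}.

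The main obstacle is precisely this discriminatory step. Because the activation in~\eqref{eq:activation_fct} is the hard threshold, its only discontinuity lies exactly on the hyperplane on which we take the limit, so one must be careful to choose the shift $c$ in a co-countable (hence $|\mu|$-almost every) set to avoid putting $\mu$-mass on the boundary and to apply the bounded convergence theorem cleanly. Alternatively, one can appeal to the sharper density results of \cite{hornik1991approximation} and \cite{leshno1993multilayer}, which only require $\sigma$ to be bounded, non-constant, and Borel measurable, thereby bypassing the boundary issue entirely and directly delivering the claimed $\eps$-approximation.
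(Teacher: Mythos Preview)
The paper does not actually prove this theorem: it is presented in Section~\ref{sec:uaptheorems} as a classical preliminary result, quoted from \cite{devroye2013probabilistic} (p.~519) with attribution to \cite{cybenko1989approximation}, \cite{hornik1991approximation}, and \cite{leshno1993multilayer}, and no proof is given in the paper itself. There is therefore nothing in the paper to compare your argument against.

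That said, your sketch is a faithful outline of Cybenko's Hahn--Banach/Riesz-representation argument and is essentially correct. Two minor remarks. First, since the paper's sigmoid-type activations satisfy $\sigma(t)\to\pm 1$ rather than $0/1$, the pointwise limit of $\sigma(\lambda(\boldsymbol{v}^\top\boldsymbol{x}-c))$ as $\lambda\to\infty$ is $2\cdot\mathbf{1}_{\{\boldsymbol{v}^\top\boldsymbol{x}>c\}}-1$ (up to the boundary hyperplane), not the indicator itself; this is harmless for the discriminatory conclusion but worth stating precisely. Second, your fallback to \cite{hornik1991approximation} and \cite{leshno1993multilayer} is well chosen, since the hard threshold in~\eqref{eq:activation_fct} is discontinuous and so falls outside Cybenko's original continuous-sigmoidal hypothesis; those references indeed cover bounded, non-constant, measurable (resp.\ non-polynomial) activations and deliver the density statement directly.
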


UA properties can also be established for two-hidden layer neural networks. The difference to the one-hidden layer case is the distribution of the nodes in each layer. According to \cite{devroye2013probabilistic}, on page 517, little is gained from a theoretical perspective by the inclusion of an additional layer.  

\section{Single Hidden Layer, FC-BNN, Binary Inputs}
\label{sec:1_fc_bin}

In this section, we deal with the scenario of binary input values for a FC-BNN while restricting to the one-hidden layer case. Note that -- if not otherwise stated -- the assumptions in Section \ref{sec:model} regarding the MAC operations and the used activation function in $\eqref{eq:activation_fct}$ are valid. Under this scenario, we show that one-hidden layer FC-BNNs are capable to universally approximate any function living on the same domain as the input values. This is established in the upcoming Theorem \ref{thm:bin_input}. In its core, it is based on a deterministic function-dependent FC-BNN fitting on the underlying domain grid and intends to establish the first UA property of FC-BNNs in this paper. 

\begin{theorem}\label{thm:bin_input}
    Let $X=\{-1,1\}^d$ be some $d$-dimensional binary input space and let $f: X \to \Rbb$ be the ground-truth function. Assume activation functions of the type as in $\eqref{eq:activation_fct}$ together with the MAC operations given in $(\ref{MACoperator})$. Then
    for all $\eps>0$ there exists some fully connected BNN with one hidden layer denoted as $\Ocal_{\BNN}^{h=1}$ such that
    \begin{equation}\label{eq:ua_thm2}
        \sup_{\boldsymbol{x} \in X} \left|\mathcal{O}^{h=1}_{\BNN}(\boldsymbol{x}) - f(\boldsymbol{x})\right| < \eps
    \end{equation}{}
    holds. 
\end{theorem}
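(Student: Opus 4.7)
Since the input domain $X=\{-1,1\}^d$ is finite with cardinality $2^d$, the supremum in \eqref{eq:ua_thm2} reduces to a maximum over finitely many points. It therefore suffices to construct a single-hidden-layer FC-BNN that interpolates $f$ \emph{exactly} at every $\boldsymbol{x}\in X$: any such network has zero approximation error, which in particular is strictly below $\eps$. The overall strategy is to build a hidden layer of ``indicator'' neurons, one per point of $X$, and then choose the real-valued output weights to match the $2^d$ target values $f(\boldsymbol{x})$.

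For each $\boldsymbol{x}^*=(x^*_1,\dots,x^*_d)\in X$ I would devote one hidden neuron $j$ whose incoming weights are set to $w^1_{i,j}:=x^*_i\in\{\pm1\}$. For an arbitrary $\boldsymbol{x}\in X$, the pre-activation then reads
\[ s^1_j(\boldsymbol{x}) \;=\; \sum_{i=1}^d x^*_i\, x_i \;=\; d - 2\,\bigl|\{i:x_i\neq x^*_i\}\bigr|, \]
which equals $d$ when $\boldsymbol{x}=\boldsymbol{x}^*$ and is at most $d-2$ otherwise. Picking the threshold $b^1_j\in(d-2,d)$, say $b^1_j=d-1$, the activation \eqref{eq:activation_fct} then returns $+1$ exactly when $\boldsymbol{x}=\boldsymbol{x}^*$ and $-1$ in all other cases. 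So the indicator-neuron construction is immediate from the structure of the hypercube $\{\pm1\}^d$.

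Next I would add one auxiliary ``bias'' hidden neuron with weights $w^1_{i,N+1}:=1$ and threshold $b^1_{N+1}:=-d-1$; since the pre-activation $\sum_i x_i\geq -d>-d-1$, this neuron outputs $+1$ on all of $X$. Enumerating $X=\{\boldsymbol{x}^{(1)},\dots,\boldsymbol{x}^{(N)}\}$ with $N=2^d$ and denoting by $h_j$ the output of indicator neuron $j$, evaluation of the output layer at $\boldsymbol{x}^{(k)}$ yields
\[ \Ocal^{h=1}_{\BNN}(\boldsymbol{x}^{(k)}) \;=\; \sum_{j=1}^N w^2_j\, h_j(\boldsymbol{x}^{(k)}) + w^2_{N+1} \;=\; 2w^2_k - S + w^2_{N+1}, \qquad S:=\sum_{j=1}^N w^2_j. \]
Setting $w^2_{N+1}:=S$ decouples the interpolation conditions into $2w^2_k=f(\boldsymbol{x}^{(k)})$, solved by $w^2_k=\tfrac12 f(\boldsymbol{x}^{(k)})$, after which $S=\tfrac12\sum_k f(\boldsymbol{x}^{(k)})$ is consistently determined. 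The freedom to take real-valued output-layer weights, granted by the system model, is exactly what makes this final linear step work.

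The only genuinely nontrivial ingredient is the indicator-neuron construction; everything else is linear bookkeeping. The subtlety I expect to be the main (and essentially only) obstacle is that, without the added bias neuron, the $N\times N$ interpolation matrix reduces to $2I-J$, which is singular precisely when $N=2$, i.e.\ when $d=1$. The bias neuron above sidesteps that degeneracy uniformly, yielding a construction valid for every $d\geq 1$ and hence proving the theorem.
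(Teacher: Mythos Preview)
Your proof is correct and follows essentially the same route as the paper: identical indicator-neuron construction (weights $=\boldsymbol{x}^*$, threshold $d-1$), followed by solving a linear system for the real output weights, with an always-on bias neuron to handle the singularity at $d=1$. The only difference is cosmetic: the paper treats $d\geq 2$ and $d=1$ separately (invoking Sylvester's determinant theorem to show $2I-J$ is invertible when $N=2^d>2$, and adding the extra neuron only for $d=1$), whereas you add the bias neuron uniformly and solve the resulting system explicitly, which is a mild streamlining.
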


\begin{proof}
    First, we consider the case $d\geq 2$, afterwards we also handle the case $d=1$.
    As the input is $d$ dimensional, there are $2^d$ different inputs. We call them $\boldsymbol{x}^1, \dots, \boldsymbol{x}^{2^d}$.
    We construct the neural network by putting $n_1 = 2^d$ neurons on the first layer.
    The weights $w^1_{1,j}, \dots, w^1_{d,j}$ corresponding to the input-layer for the $j$-th neuron are set to the $j$-th possible input, i.e., $(w^1_{1,j}, \dots, w^1_{d,j})^\top = \boldsymbol{x}^j$.
    The activation threshold is set to $d-1$.
    As a result, a neuron on the first layer only has an output of $1$ if its weights coincide with the input.
    Hence, for the $j$-th input $\boldsymbol{x}^j$, only the $j$-th neuron has an output of $1$, all the others have the output $-1$.
    We choose the weights $w^2_{j} \in \mathbb{R}$ from the $j$-th neuron on the hidden layer to the output layer $(L = 2)$ such that for each $j$
    \begin{align}\label{eq:proof}
        w^2_{j} + \sum_{i \neq j} -w^2_{i} = f(\boldsymbol{x}^j)
    \end{align}
    holds. 
    Note that the above equation can be rewritten into a linear equation of the form $\boldsymbol{A}\boldsymbol{w} = \boldsymbol{b}$, where $\boldsymbol{A} = - \boldsymbol{1}_{n_1}\boldsymbol{1}_{n_1}^\top + 2 \boldsymbol{I}_{n_1} \in \{\pm1\}^{n_1 \times n_1}$ is the matrix with $1$ on the diagonal and $-1$ at all other entries, $\boldsymbol{w} = (w^2_{1} , \dots, w^2_{n_1} )^\top$, and $\boldsymbol{b} = (f(\boldsymbol{x}^1), \dots, f(\boldsymbol{x}^d))^\top$. Here, $\boldsymbol{1}_{n_1} = (1, \dots,1)^\top$ is the $n_1$-dimensional vector consisting only of ones and $\boldsymbol{I}_{n_1} \in \Rbb^{n_1 \times n_1}$ denotes the identity matrix. 
    Using Sylvester's determinant theorem, we obtain 
    \begin{align*}
        \det(\boldsymbol{A}) &= \det(2\boldsymbol{I}_{n_1} - \boldsymbol{1}_{n_1}\boldsymbol{1}_{n_1}^\top) \\
        &= \det(2I_{n_1})\cdot \det(1 - \boldsymbol{1}_{n_1}^\top (2\boldsymbol{I}_{n_1})^{-1} \boldsymbol{1}_{n_1} ) \\
        &= 2^{n_1} \cdot (1- (1/2)n_1)<0,
    \end{align*}
    since $n_1 = 2^d > 2$, for $d \ge 2$. Therefore, we can immediately deduce that the solution to $\eqref{eq:proof}$ is uniquely given by $\boldsymbol{w} = \boldsymbol{A}^{-1} \boldsymbol{b}$ if $d\geq 2$.  Hence, this BNN assigns to each input $\boldsymbol{x}^j$ the output $\Ocal(\boldsymbol{x}^j) = f(\boldsymbol{x}^j)$ and mimics the function $f$ exactly.
    
    In case of $d=1$, $\boldsymbol{A}$ as defined above does not have full rank anymore and the existence of the inverse is not guaranteed. Therefore, the proposed network structure for $d \ge 2$ is not applicable for $d =1$. We need to introduce one additional neuron on the hidden layer to deal with this case.
    The additional third neuron is defined by any weight assignment and threshold $-d-1$, i.e., its output is always $1$.
    To find the approximating neural network, we then need to assign weights $w^2_j,~j=1,2,3$ for the output layer, such that the equations 
    \begin{align}
        w^2_1 - w^2_2 + w^2_3 & = f(x^1)
        \\ -w^2_1 + w^2_2 + w^2_3 & = f(x^2)
    \end{align}
    hold.
    We leave it to the reader to check that the assignment $w^2_1:=\frac{f(x^1)-f(x^2)}{4}$, $w^2_2:=-\frac{f(x^1)-f(x^2)}{4}$ and $w^2_3:=\frac{f(x^1)+f(x^2)}{2}$ fulfills the properties from the above two equations.
\end{proof}

This theorem proves not only an universal approximation theorem, but also the equality between functions on a binary input space and FC-BNNs with one hidden layer on a binary input space. Extending the input space to an arbitrary, countable and finite space is not directly possible, since counterexamples can be constructed for such scenarios. This will be partly seen in the next section.

\section{Negative Result: Single Hidden Layer, FC-BNN, Real Inputs}
\label{sec:1_fc_real}

\begin{figure}
    \centering
    \begin{tikzpicture}
    \begin{axis}[xmin=-1,xmax=1,ymin=-1,ymax=1]
        \addplot3[mesh, domain=-0.49:0.49,y domain=-1:1] {exp(1-(1/(1-(4*x^2))))};
        \addplot3[mesh, domain=0.51:1,y domain=-1:1] {0};
        \addplot3[mesh, domain=-1:-0.51,y domain=-1:1] {0};
    \end{axis}
\end{tikzpicture}
    \caption{Ground truth function $f(\boldsymbol{x})$}
    \label{fig:ground_truth}
\end{figure}

\begin{figure}
    \centering
    \begin{tikzpicture}[scale=1.5]
        \draw[->, line width=1pt] (-1.5,0) -- (1.5,0);
        \draw[->, line width=1pt] (0,-1.5) -- (0,1.5);
        
        \draw[ dashed, line width=0.5pt] (0.5,-1.5) -- (0.5,1.5);
        \draw[ dashed, line width=0.5pt] (-0.5,-1.5) -- (-0.5,1.5);
        
        \tikzset{
            cross/.pic = {
            \draw[rotate = 45] (-#1,0) -- (#1,0);
            \draw[rotate = 45] (0,-#1) -- (0, #1);
            }
        }
        
        \node at (-1,-1.7) {$f=0$};
        \node at (0,-1.7) {$f=$ bump};
        \node at (1,-1.7) {$f=0$};

        \draw (1,0) pic[rotate=0] {cross=0.2} node[above right] {$b_1$};
        \draw (0,1) pic[rotate=0] {cross=0.2} node[above right] {$a_1$};
        \draw (-1,0) pic[rotate=0] {cross=0.2} node[above right] {$b_2$};
        \draw (0,-1) pic[rotate=0] {cross=0.2} node[above right] {$a_2$};
    \end{tikzpicture}
    \caption{Critical points for Counterexample.}
    \label{fig:critical_points}
\end{figure}

In this section, we will construct a counterexample showing that FC-BNNs are not capable of universally approximating a function living on $[-1,1]^2$, when using a single hidden layer and continuous input domains. The aim was to extend the proof of the previous section with one hidden layer FC-BNN to inputs being continuous valued. Our research indicates that it is not sufficient to use a single hidden layer FC-BNN to achieve universal approximation in certain functions of this type. This shows that the input domain has an impact on the universal approximation properties of FC-BNNs and interacts with the number of hidden layers used in such a network. The counterexample is based on a continuous bump-function, for which we select input points lying on parallel diagonals. This way, we show that the error of the approximating single hidden layer FC-BNN is at least $0.5$ and cannot be further dropped.

Focusing on a two-dimensional input space, we consider a ground truth function $f : [-1,1]^2 \to \Rbb$ which fulfills $f(0,x_2) = 1$ and $f(-1,x_2) = f(1,x_2) = 0$ for all $x_2 \in [-1,1]$. 
As an example, we consider a continuous and even differentiable function $f$ defined by
\begin{equation}\label{eq:gt_funct}
	f: \boldsymbol{x} = (x_1, x_2)^\top \mapsto \begin{cases}
	e^{1-\frac{1}{1-4 (x_1)^2}} & ,\,|x_1| < 0.5 \\
	0 & ,\,|x_1| \geq 0.5
	\end{cases}.
\end{equation}
Note that the function $f$ is a non-negative bump function attaining its maximum at $(0,0)$ as depicted in Figure~\ref{fig:ground_truth}.

In the following, we prove there is no single-hidden-layer fully connected BNN structure that can approximate $f(x_1,x_2)$ as defined above. In doing so, we first set up a Lemma, which delivers an equality between points lying on the axes of the considered bivariate grid. 

\begin{lemma}\label{lem:critical_property_easy}
    Let $\Ocal_{\BNN}^{h =1}$ be any fully connected single-hidden-layer BNN with activation function $\eqref{eq:activation_fct}$ and MAC operator as in $(\ref{MACoperator})$ aiming to approximate $f$ as given in \eqref{eq:gt_funct}.
    We define, $\boldsymbol{a}_1:= (0,1)^\top$, $\boldsymbol{a}_2:= (0,-1)^\top$, $\boldsymbol{b}_1:= (1,0)^\top$, $\boldsymbol{b}_2:= (-1,0)^\top$, as depicted in Figure~\ref{fig:critical_points}.
    For this BNN, the property 
    \begin{equation}\label{eq:critical_property_easy}
    \Ocal_{\BNN}^{h=1}(\boldsymbol{b}_1) + \Ocal_{\BNN}^{h=1}(\boldsymbol{b}_2)  = \Ocal_{\BNN}^{h=1}(\boldsymbol{a}_1) + \Ocal_{\BNN}^{h=1}(\boldsymbol{a}_2) 
    \end{equation}
    holds.
\end{lemma}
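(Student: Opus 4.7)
The plan is to unpack the BNN output as a weighted sum of hidden-neuron activations and then argue the equality neuron-by-neuron. Writing the first layer's binary weights as $w^1_{1,j}, w^1_{2,j} \in \{\pm 1\}$, the pre-activation of the $j$-th hidden neuron at input $\boldsymbol{x} = (x_1,x_2)^\top$ is $s^1_j(\boldsymbol{x}) = w^1_{1,j} x_1 + w^1_{2,j} x_2$. For the four points in question this simplifies dramatically: at $\boldsymbol{a}_1=(0,1)^\top$ we get $s^1_j = w^1_{2,j}$, at $\boldsymbol{a}_2=(0,-1)^\top$ we get $-w^1_{2,j}$, at $\boldsymbol{b}_1=(1,0)^\top$ we get $w^1_{1,j}$, and at $\boldsymbol{b}_2=(-1,0)^\top$ we get $-w^1_{1,j}$.

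The crucial observation is that, because $w^1_{1,j}, w^1_{2,j} \in \{\pm 1\}$, each of the two unordered pairs
\[
\{s^1_j(\boldsymbol{a}_1),\, s^1_j(\boldsymbol{a}_2)\} \quad \text{and} \quad \{s^1_j(\boldsymbol{b}_1),\, s^1_j(\boldsymbol{b}_2)\}
\]
equals $\{+1, -1\}$ as a multiset. Applying the activation $\sigma_{b^1_j}$ from \eqref{eq:activation_fct} and summing, we get
\[
o^1_j(\boldsymbol{a}_1) + o^1_j(\boldsymbol{a}_2) \;=\; \sigma_{b^1_j}(1) + \sigma_{b^1_j}(-1) \;=\; o^1_j(\boldsymbol{b}_1) + o^1_j(\boldsymbol{b}_2),
\]
so the per-neuron contribution to the two sides of \eqref{eq:critical_property_easy} coincides.

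Finally I would multiply both sides by the (real-valued) output weight $w^2_j$, sum over $j = 1, \dots, n_1$, and use linearity of the output layer $out(\cdot) = \sum_j w^2_j \phi^1_j$ to conclude
\[
\Ocal_{\BNN}^{h=1}(\boldsymbol{a}_1) + \Ocal_{\BNN}^{h=1}(\boldsymbol{a}_2) = \Ocal_{\BNN}^{h=1}(\boldsymbol{b}_1) + \Ocal_{\BNN}^{h=1}(\boldsymbol{b}_2).
\]
Note that no reference to the specific function $f$ in \eqref{eq:gt_funct} is needed for this identity; the lemma is purely a structural consequence of binarizing both weights and the first-layer inputs $\pm 1$. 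I do not expect a genuine obstacle here; the only subtlety worth stating explicitly is the invariance of the multiset $\{+w,-w\} = \{+1,-1\}$ under the sign of $w \in \{\pm 1\}$, which is what collapses the dependence on the particular pattern of first-layer weights.
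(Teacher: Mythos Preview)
Your proposal is correct and follows essentially the same strategy as the paper: reduce to a per-neuron identity and then pass to the full output by linearity. The only cosmetic difference is that the paper establishes the per-neuron equality via a four-case split on the sign pattern $(w^1_{1,j},w^1_{2,j})$ (using that $\boldsymbol a_i$ and $\boldsymbol b_i$ lie on common level lines of $s^1_j$), whereas your multiset observation $\{w,-w\}=\{+1,-1\}$ for $w\in\{\pm1\}$ collapses all four cases at once.
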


\begin{proof}
    We prove that~\eqref{eq:critical_property_easy} holds for each neuron output $o^1_{j}$ with $o^1_j(x_1,x_2) := \sigma_{b^1_j}( w^1_{1,j} x_1 + w^1_{2,j}x_2 )$.
    As a consequence, it then also holds for the linear combination $\Ocal^{h=1}_{\BNN}(\boldsymbol{x}) = \sum_j w^2_{j} \cdot o^1_j(\boldsymbol{x})$.
    To prove~\eqref{eq:critical_property_easy}, we consider all different cases for the two neuron weights $w^1_{1,j}$ and $w^1_{2,j}$:
    
    \textbf{Case~1:} $w^1_{1,j}$ and $w^1_{2,j}$ are both $+1$.
    In this case, the function value of $f_j$ is equal for points that lie on a straight line with normal vector $(1,1)^\top$, i.e., $v_1 := f_j(\boldsymbol{a}_1) = f_j(\boldsymbol{b}_1)$, $v_2 := f_j(\boldsymbol{b}_2) = f_j(\boldsymbol{a}_2)$.
    By inserting these values in Equation~\eqref{eq:critical_property_easy}, we obtain $v_1 + v_2 = v_1 + v_2$.
    Therefore, Equation~\eqref{eq:critical_property_easy} holds in this case.
    
    \textbf{Case~2 - Case~4:} $(w^1_{1,j}, w^1_{2,j})^\top$ are $(1,-1)^\top$, $(-1,1)^\top$ or $(-1,-1)^\top$.
    As in Case~1, the function values are the same if they are on a straight line with normal vector $(w^1_{1,j}, w^1_{2,j})$.
    Due to the symmetry of Equation~\eqref{eq:critical_property_easy}, it still holds in these cases.
\end{proof}

In the second part of the counterexample, we show that the function $f$ has an approximation lower bound of $1/2$ and does not admit universal approximation properties by a single hidden layer FC-BNN. Note that any function, fulfilling the condition stated in~\eqref{eq:critical_property_easy} cannot be universally approximated by a single hidden layer FC-BNN. 

\begin{lemma}\label{lem:no_approx_easy}
    Let $\Ocal^{h=1}_{\BNN}$ be any fully connected single-hidden-layer BNN with activation function $\eqref{eq:activation_fct}$ and MAC operator as in $\eqref{MACoperator}$ aiming to approximate $f$ as given in~\eqref{eq:gt_funct}. Then 
    \begin{equation}\label{eq:no_approx_easy}
        \sup_{\boldsymbol{x} \in \Rbb} \left|\Ocal_{\BNN}^{h=1}(\boldsymbol{x}) - f(\boldsymbol{x})\right| \geq \frac{1}{2}
    \end{equation}
\end{lemma}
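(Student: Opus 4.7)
The plan is to combine Lemma \ref{lem:critical_property_easy} with the specific values that $f$ takes at the four critical points $\boldsymbol{a}_1, \boldsymbol{a}_2, \boldsymbol{b}_1, \boldsymbol{b}_2$ to derive a contradiction when one assumes the supremum error is strictly less than $1/2$. The argument is purely algebraic: the lemma enforces a rigid balance between the network outputs on the $x_1$-axis and on the $x_2$-axis, whereas $f$ itself is maximally unbalanced on these four points.

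First I would evaluate $f$ at the four points: plugging into \eqref{eq:gt_funct} gives $f(\boldsymbol{a}_1) = f(0,1) = e^{1-1} = 1$ and $f(\boldsymbol{a}_2) = f(0,-1) = 1$, while $f(\boldsymbol{b}_1) = f(1,0) = 0$ and $f(\boldsymbol{b}_2) = f(-1,0) = 0$, since $|\pm 1| \geq 0.5$. Thus $f(\boldsymbol{a}_1) + f(\boldsymbol{a}_2) - f(\boldsymbol{b}_1) - f(\boldsymbol{b}_2) = 2$.

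Next I would argue by contradiction. Suppose for the sake of contradiction that
\begin{equation*}
\sup_{\boldsymbol{x} \in \Rbb^2} \bigl|\Ocal_{\BNN}^{h=1}(\boldsymbol{x}) - f(\boldsymbol{x})\bigr| < \tfrac{1}{2}.
\end{equation*}
Applying this pointwise at the four critical points and summing in the right direction yields
\begin{equation*}
\Ocal_{\BNN}^{h=1}(\boldsymbol{a}_1) + \Ocal_{\BNN}^{h=1}(\boldsymbol{a}_2) > 2 - 1 = 1
\end{equation*}
and
\begin{equation*}
\Ocal_{\BNN}^{h=1}(\boldsymbol{b}_1) + \Ocal_{\BNN}^{h=1}(\boldsymbol{b}_2) < 0 + 1 = 1.
\end{equation*}
This contradicts the equality \eqref{eq:critical_property_easy} established in Lemma \ref{lem:critical_property_easy}, so the assumed strict bound cannot hold and \eqref{eq:no_approx_easy} follows.

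There is no substantial obstacle here, since the heavy lifting was done in Lemma \ref{lem:critical_property_easy}; the only subtlety is bookkeeping the direction of the inequalities (the $\boldsymbol{a}_i$ side must exceed $1$ while the $\boldsymbol{b}_i$ side must fall below $1$) and making sure the constant $1/2$ is sharp with respect to the gap of $2$ between the two sums of ground-truth values divided by the $4$ critical points.
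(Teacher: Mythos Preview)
Your proof is correct and follows essentially the same approach as the paper: both evaluate $f$ at the four critical points, invoke Lemma~\ref{lem:critical_property_easy}, and exploit the gap of $2$ between $f(\boldsymbol{a}_1)+f(\boldsymbol{a}_2)$ and $f(\boldsymbol{b}_1)+f(\boldsymbol{b}_2)$ to force an error of at least $1/2$. The only cosmetic difference is that the paper argues directly by bounding $2 = (1-\Ocal(\boldsymbol{a}_1)) + (1-\Ocal(\boldsymbol{a}_2)) + \Ocal(\boldsymbol{b}_1) + \Ocal(\boldsymbol{b}_2) \leq 4M$ for $M$ the maximal pointwise error, whereas you phrase the same arithmetic as a contradiction.
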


\begin{proof}
    The result of the left hand side of~\eqref{eq:no_approx_easy} is lower bounded by
    \begin{equation}\label{eq:def_M}
        \begin{aligned}
            &\sup_{\boldsymbol{x} \in \Rbb} \left|\Ocal_{\BNN}^{h=1}(\boldsymbol{x}) - f(\boldsymbol{x})\right| \\&\geq \max_{\boldsymbol{x}\in \{\boldsymbol{a}_1,\textbf{a}_2,\boldsymbol{b}_1,\boldsymbol{b}_2\} } \left|\Ocal^{h=1}_{\BNN}(\boldsymbol{x}) - f(\boldsymbol{x})\right| =: M.  
        \end{aligned}
    \end{equation}
    More specifically $M$ is the maximum of the four values $|1-\Ocal^{h=1}_{\BNN}(\boldsymbol{a}_1)|$, $|1-\Ocal^{h=1}_{\BNN}(\boldsymbol{a}_2)|$, $|\Ocal^{h=1}_{\BNN}(\boldsymbol{b}_1)|$, and $|\Ocal^{h=1}_{\BNN}(\boldsymbol{b}_2)|$. We obtain these values by inserting $\boldsymbol{a}_1$, $\boldsymbol{a}_2$, $\boldsymbol{b}_1$, $\boldsymbol{b}_2$ into the left hand side of~\eqref{eq:no_approx_easy} and use the definition of the bump function $f(\boldsymbol{x})$ in~\eqref{eq:gt_funct}. 
    
    Due to Lemma~\ref{lem:critical_property_easy}, i.e., the condition in~\eqref{eq:critical_property_easy} holds, we reformulate it to
    \begin{equation}\label{eq:M>=0.5}
        \begin{aligned}
            2 = &(1-\Ocal_{\BNN}^{h=1}(\boldsymbol{a}_1)) + (1- \Ocal_{\BNN}^{h=1}(\boldsymbol{a}_2)) \\&+ \Ocal_{\BNN}^{h=1}(\boldsymbol{b}_1) + \Ocal_{\BNN}^{h=1}(\boldsymbol{b}_2)\\
            \leq& \;4M,
        \end{aligned}
    \end{equation}
    where the inequality is due to the definition of $M$.
    It follows that $M\geq \frac{1}{2}$.
    Using~\eqref{eq:def_M}~and~\eqref{eq:M>=0.5}, we conclude the result.
\end{proof}

\begin{remark}
    In fact, this negative result is also extendable to a tertiary input spaces.
    Consider a two dimensional tertiary input space, i.e., $X=\{-1,0,1\}^2$.
    We define the ground truth function by $f(0,x_2) = 1$ and $f(-1,x_2) = f(1,x_2) = 0$ for all $x_2 \in \{-1,0,1\}$.
    The proofs of Lemma~\ref{lem:critical_property_easy} and Lemma~\ref{lem:no_approx_easy} consider only the values of $f$ at $\boldsymbol{a}_1, \boldsymbol{a}_2, \boldsymbol{b}_1, \boldsymbol{b}_2$ and are therefore still valid for this specific case. 
\end{remark}

\section{Two Hidden Layer, FC-BNN, Real Inputs}
\label{sec:2_fc_real}

A natural question extended from Section~\ref{sec:1_fc_real} is whether FC-BNNs can admit UA properties with two hidden layers when the input domain is continuous. We provide a positive answer to the question by a constructive proof in the following theorem, indicating that two layers are sufficient to admit UA properties as long as the function $f$ has Lipschitz continuity.

\begin{figure}
    \centering
    \begin{tikzpicture}[scale=1.5]
        \draw[->, line width=1pt] (-0.3,0) -- (1.3,0);
        \draw[->, line width=1pt] (0,-0.3) -- (0,1.3);
        
        
        \draw[blue] (0,0) rectangle (1,1);
        
        \draw[red, dashed] (1.5,0.5) -- (0.5,1.5);
        \draw[red] (1.6,-0.1) node[below right] {4} -- (.15,1.35);
        \draw[red] (1.35,-0.35)node[below right] {3} -- (-0.1,1.1);
        \draw[red] (1.1,-0.6) node[below right] {2} -- (-0.35,.85);
        \draw[red] (.85,-.85) node[below right] {1} -- (-.6,.6);
        
        \draw[red, dashed] (-.5,0.5) -- (0.5,1.5);
        \draw[red] (-.6,-.1) node[below left] {4} -- (0.85,1.35);
        \draw[red] (-.35,-.35) node[below left] {3} -- (1.1,1.1);
        \draw[red] (-.1,-.6) node[below left] {2} -- (1.35,0.85);
        \draw[red] (.15,-.85) node[below left] {1} -- (1.6,.6);
        
        \draw[->, red] (1.2,1.2) to (1.5,1.5) node[above right] {$(1,1)$};
        \draw[->, red] (-.2,1.2) to (-.5,1.5) node[above left] {$(-1,1)$};
        
        \node[red] at (0.75,0.5) {A};

    \end{tikzpicture}
    \caption{First Layer for Theorem~\ref{thm:2_hidden_FC_bnn}.}
    \label{fig:first_layer}
\end{figure}
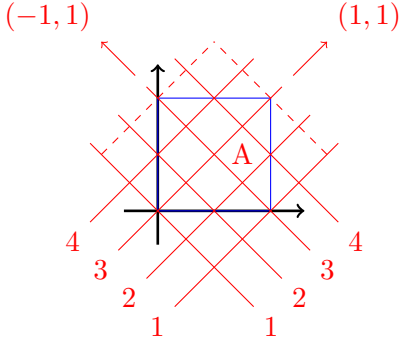

\begin{theorem}\label{thm:2_hidden_FC_bnn}
    Let $X \subseteq \mathbb{R}^d$ be some arbitrary dimensional compact input space and let $f: X \to \Rbb$ be the ground-truth function.
    Let $f$ be Lipschitz continuous, i.e., we have $\lambda\in \mathbb{R}$ such that for all $\boldsymbol{x}, \boldsymbol{y} \in X$ the property $|f(\boldsymbol{x})-f(\boldsymbol{y})| \leq \lambda \cdot \norm{ \boldsymbol{x}-\boldsymbol{y}}$ holds.
    For all $\eps>0$ there exists some fully connected BNN with two hidden layers $\Ocal_{\BNN}^{h=2}$ such that
    \begin{equation}\label{eq:ua_thm3}
        \sup_{\boldsymbol{x} \in X} \left|\mathcal{O}^{h=2}_{\BNN}(\boldsymbol{x}) - f(\boldsymbol{x})\right| < \eps
    \end{equation}{}
    holds.
\end{theorem}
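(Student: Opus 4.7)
The strategy is to mimic the construction in Theorem~\ref{thm:bin_input}: use the two hidden layers to realize a piecewise-constant approximant to $f$ whose pieces are polyhedral cells cut out by hyperplanes that a BNN neuron can actually implement. A first-layer BNN neuron computes $\sigma_{b}(\langle\boldsymbol{w},\boldsymbol{x}\rangle)$ with $\boldsymbol{w}\in\{\pm1\}^d$ and $b\in\Rbb$, i.e., it is the indicator of a halfspace whose normal lies in $\{\pm1\}^d$. Since the family $\{\pm1\}^d$ contains $d$ linearly independent directions, finitely many parallel translates of such hyperplanes tessellate $\Rbb^d$ into bounded polytopal cells whose diameters can be made arbitrarily small. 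Choose $\delta>0$ with $\lambda\delta<\eps$; intersecting a sufficiently fine such tessellation with $X$ yields a finite partition $X = A_1 \cup \dots \cup A_N$ with $\diam(A_k) < \delta$ for every $k$, each cell being the intersection of finitely many $\{\pm1\}^d$-normal halfspaces. Pick an arbitrary representative $\boldsymbol{c}_k\in A_k$.

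The first hidden layer then contains one neuron for each of the finitely many halfspaces appearing in the cell descriptions, each realized by its natural choice of $\boldsymbol{w}\in\{\pm1\}^d$ and threshold $b\in\Rbb$. For every $\boldsymbol{x}\in A_k$, the binary output $\boldsymbol{\phi}^1(\boldsymbol{x})\in\{\pm1\}^{n_1}$ is a fixed pattern $\boldsymbol{v}_k$ that depends only on $k$. The second hidden layer has exactly $N$ neurons, one per cell; the $k$-th neuron uses weights $\boldsymbol{v}_k$ and threshold $n_1-1$, so that its pre-activation equals $n_1$ precisely when $\boldsymbol{\phi}^1(\boldsymbol{x}) = \boldsymbol{v}_k$ and is at most $n_1-2$ otherwise. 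Consequently, for each $\boldsymbol{x}\in X$ there is exactly one index $k(\boldsymbol{x})$ with $o^2_{k(\boldsymbol{x})}(\boldsymbol{x}) = +1$ and $o^2_j(\boldsymbol{x}) = -1$ for $j\neq k(\boldsymbol{x})$.

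Finally, choose real output weights $w^3_1,\dots,w^3_N$ so that $2 w^3_k - \sum_{j=1}^N w^3_j = f(\boldsymbol{c}_k)$ for every $k$. This is identical in form to the linear system solved in the $d\geq 2$ branch of Theorem~\ref{thm:bin_input}; its matrix $2\boldsymbol{I}_N - \boldsymbol{1}_N\boldsymbol{1}_N^\top$ is invertible whenever $N\neq 2$, and the boundary case $N=2$ can be absorbed by appending a single constant-$1$ dummy neuron exactly as in the $d=1$ branch of that theorem. With this choice $\Ocal^{h=2}_{\BNN}(\boldsymbol{x}) = f(\boldsymbol{c}_{k(\boldsymbol{x})})$, and Lipschitz continuity yields
\[
\bigl|\Ocal^{h=2}_{\BNN}(\boldsymbol{x}) - f(\boldsymbol{x})\bigr| \leq \lambda\,\diam(A_{k(\boldsymbol{x})}) < \lambda\delta < \eps
\]
uniformly on $X$.

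The main obstacle is purely geometric: the first layer can only realize halfspaces whose normals lie in $\{\pm1\}^d$, so I cannot employ ordinary axis-aligned grid cells and must instead argue that the restricted class of ``diagonal'' hyperplanes still generates an arbitrarily fine partition of the compact set $X$. A secondary technical point is that the strict inequality in $\sigma_b$ means points lying exactly on a cell face may fail to belong to any cell's interior; this can be removed either by a generic perturbation of the finitely many offsets $b$ so that no $\boldsymbol{x}\in X$ lies on any hyperplane, or by slightly thickening adjacent cells and absorbing the resulting overlap discrepancy into the $\eps$ budget via Lipschitz continuity.
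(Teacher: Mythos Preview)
Your proposal is correct and follows essentially the same approach as the paper: use $d$ linearly independent directions from $\{\pm1\}^d$ to tessellate $X$ into small parallelotope cells via first-layer neurons, then invoke the construction of Theorem~\ref{thm:bin_input} in the second layer to map each cell's binary code to a representative function value, and bound the error by Lipschitz continuity. Your explicit handling of the degenerate case $N=2$ and of boundary points under the strict-inequality activation are technical details the paper leaves implicit.
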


\begin{proof}
    We use a similar construction as in \cite{spallanzani/etal/2019} for the first layer and Theorem~\ref{thm:bin_input} for the second layer.
    
    \textbf{First Layer:}
    At first, we choose $d$ linear independent vectors with entries only in $\{1,-1\}$, e.g., $\boldsymbol{v}^1 = (1,\dots,1)$, $\boldsymbol{v}^2=(-1,1, \dots, 1)$, $\boldsymbol{v}^3 = (1,-1,1, \dots, 1)$, $\dots$, $\boldsymbol{v}^d=(1, \dots, 1, -1,1) \in \Rbb^{d}$.
    Given a length $s$ of the edges and a base point $p \in \Rbb^d$, these vectors span a parallelotope $P^s_p := P^s_p(\boldsymbol{v}^1, \dots, \boldsymbol{v}^d)$.
    
    By the Lipschitz continuity we know that if $\norm{\boldsymbol{x} - \boldsymbol{y}} < \frac{\eps}{\lambda}$ then $|f(\boldsymbol{x})-f(\boldsymbol{y})|<\eps$.
    We choose $s \in \Rbb>0$ small enough such that $\diam(P^s_p(\boldsymbol{v}^1, \dots, \boldsymbol{v}^d)) < \frac{\eps}{\lambda}$.
    As a result, if there exists some base point $p$ such that $\boldsymbol{x},\boldsymbol{y} \in P^s_p$, then $|f(\boldsymbol{x})-f(\boldsymbol{y})| <\eps$.
    
    In the following, we segment the input space into parallelotopes of the form $P^s_p$ and translate this into the language of neurons.
    For each vector $\boldsymbol{v}^i$, we find the minimal and maximal values of $X$ in that direction by using the dot product. 
    More specifically, we define $m^i_1 := \min_{\boldsymbol{x}}( \boldsymbol{x}^\top \boldsymbol{v}^i)$ and $m^i_2 := \max_{\boldsymbol{x}}( \boldsymbol{x}^\top \cdot \boldsymbol{v}^i)$.
    We choose base points 
    \begin{equation}\label{eq:base_points}
        p(\xi_1, \dots, \xi_d) = \sum_{i=1}^d (\xi_i \cdot s + m^i_1) \cdot \boldsymbol{v}^i
    \end{equation}
    with $\xi_i \in [0,m^i_2-m^i_1] \cap \Zbb$ for $i=1,2,\ldots,d$.
    The parallelotopes $P^s_{p(\xi_1, \dots, \xi_d)}$ derived by those base points partition the whole input space $X$ as depicted in Figure~\ref{fig:first_layer}.
    
    We translate the partition of parallelotopes into a set of neurons on the first layer, i.e., each output of the first layer uniquely determines the parallelotope of the input.
    For each vector $\boldsymbol{v}^i$ we put neurons on the first layer with weights $\boldsymbol{v}^i$ and thresholds $\xi_i \cdot s + m^i_1$ with $\xi_i \in [0,m^i_2-m^i_1] \cap \Zbb$.
    The output of these neurons uniquely determines the factors in~\eqref{eq:base_points} and therefore also determines the corresponding parallelotope.
    
    An example of our construction is depicted in Figure~\ref{fig:first_layer} for $d = 2$ dimensions of the input space.
    The blue square is the input space $X$ and the red lines illustrate the neurons on the first layer as follows.
    In particular, the neurons of the vector $\boldsymbol{v}^i$ are represented by the orthogonal lines.
    If the input is in some parallelotope $A$ for example, then the first three neurons for the vector $(1,1)$ have the output $+1$ because the parallelotope is on the side of the vector direction and the fourth neuron has the output $-1$ as the parallelotope is on the opposite direction.
    For the vector $(-1,1)$ the first two neurons have an output of $+1$ and the last two have an output of $-1$.
    
    \textbf{Second Layer:}
    We use Theorem~\ref{thm:bin_input} to create the second layer. 
    For each parallelotope $P^s_{p(\xi_1, \dots, \xi_d)}$ the inputs in that parallelotope have a specific neuron output after the first layer.
    We use the construction from the proof of Theorem~\ref{thm:bin_input} to assign to it to a specific value.
    We can use any value $f(\boldsymbol{x})$ inside that parallelotope $ \boldsymbol{x} \in P^s_{p(\xi_1, \dots, \xi_d)}$, e.g., $f(p(\xi_1, \dots, \xi_d))$.
    By construction of the parallelotope and the Lipschitz continuity of $f$, the difference between this value and any value $f(\boldsymbol{y}),~ \boldsymbol{y}\in P^s_{p(\xi_1, \dots, \xi_d)}$ is at most $\eps$.
    More formally,
    \begin{equation}
        \begin{aligned}
            \left|\Ocal^{h=2}_{\BNN}(\boldsymbol{x}) - f(\boldsymbol{x})\right| &= |{f(\boldsymbol{p}) - f(\boldsymbol{x})}|
            \\&\leq \lambda \cdot \norm{\boldsymbol{p}-\boldsymbol{x}} < \lambda \cdot \frac{\eps}{\lambda} = \eps
        \end{aligned}
    \end{equation}
    for $\boldsymbol{p} = p(\xi_1, \dots, \xi_d)$ such that $\boldsymbol{x} \in P^s_{p(\xi_1, \dots, \xi_d)}$.
\end{proof}

\section{Remove Real Weights From Output Layer}
\label{sec:removal-real-weights}
\begin{figure}
    \centering
    \newcommand{\circlediam}{0.3cm}
        \begin{tikzpicture}[xscale=0.85]


        \node[anchor=north, text width=2cm] at (0,3) {$\dots$};
        \node[anchor=north, text width=2cm] at (0,1) {$\dots$};
        \node[anchor=north, text width=2cm] at (0,-1) {$\dots$};

        \node[minimum size=1cm,draw,circle] (h3) at (0,3) {$o^{L-1}_1$};
        \node[minimum size=1cm,draw,circle] (h2) at (0,1) {$o^{L-1}_2$};
        \node[minimum size=1cm,draw,circle] (h5) at (0,-1) {$o^{L-1}_{n_{L-1}}$};

        \node[anchor=north, text width=2cm, align=center] at (0,-1.5) {hidden layer\\($\ell = {L-1}$)};
        \node[anchor=north, text width=2cm, align=center] at (3,-1.5) {output layer\\($\ell = L$)};


        \node at ($(h2)!.5!(h5)$) {\vdots};

        \coordinate (D) at ($(h3)!.5!(h5)$);
        \ExtractCoordinate{D};
        \node[minimum width=1cm,draw,circle] (o1) at (3,\YCoord) {$\sum$};

        \node[] (out) at (7,\YCoord) {$\mathcal{O}^{}_{\BNN}(\mathbf{x})$};




        \draw[->] (h2) -- (o1);
        \draw[->] (h3) -- node[above, anchor=west] {\textcolor{red}{$w^L_{k} \in \{\pm1\}$}} (o1);
        \draw[->] (h5) -- (o1);
        \draw[->] (o1) -- node[above]{\textcolor{red}{$\cdot \alpha \in \Qbb$}} (out);

    \end{tikzpicture}
    \caption{Last layers of a BNN with binarized weights in layer $\ell=L$ and scaling factor $\in \Rbb$ in the output}
    \label{fig:bnn_model_outputscale}
\end{figure}
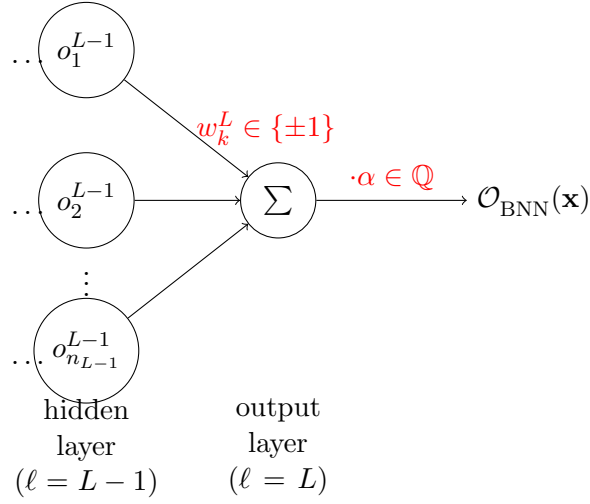

In this section we shortly discuss that our results can be transferred to neural networks that have only binary weights (even in the output layer) when introducing a scaling factor, i.e., an additional factor that is introduced after the output of the neural network. We show that the relaxation of binarizing the output weights does not have severe effect on the output accuracy and simplifies BNNs even further while almost maintaining output accuracy. Our proof is mainly based on the mathematical fact that $\mathbb{Q}$ lies dense in $\Rbb$. Binarized output weights then lead to BNNs that are fully binarized, i.e., every weight is restricted to $\{\pm 1\}$, not only the weights directing to the hidden layers.  

\begin{theorem}\label{thm:real_weights_to_scaling}
    Let $X\subseteq \Rbb^d$ be arbitrary and consider activation functions of the form $\eqref{eq:activation_fct}$. Any binarized neural network $\Ocal_{\BNN}$ can be approximated by a binarized neural network $\widetilde\Ocal_{\BNN}$ with the same number of hidden layers, which has only binary weights on the output layer and a scaling factor $\alpha \in \Qbb$ after the output layer, as depicted in Figure~\ref{fig:bnn_model_outputscale}.
    More specifically, for any $\eps>0$ there exists some $\widetilde\Ocal_{\BNN}$ as above, such that 
    \begin{equation}\label{eq:thm:real_weights_to_scaling}
        \sup_{\boldsymbol{x} \in X} \left|\Ocal_{\BNN}(\boldsymbol{x}) - \widetilde\Ocal_{\BNN}(\boldsymbol{x})\right| < \eps
    \end{equation}
    holds.
    If $\Ocal_{\BNN}$ is fully connected, then $\widetilde\Ocal_{\BNN}$ can be chosen to be also fully connected.
\end{theorem}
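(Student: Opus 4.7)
The key observation is that the inputs to the output layer, namely $\boldsymbol{\phi}^{L-1}(\boldsymbol{x})$, always lie in the finite set $\{\pm 1\}^{n_{L-1}}$ because they pass through the activation~\eqref{eq:activation_fct}. Hence the original output takes the form $\Ocal_{\BNN}(\boldsymbol{x}) = \sum_{i=1}^{n_{L-1}} w^L_i \phi^{L-1}_i(\boldsymbol{x})$, a bounded linear combination of $\pm 1$ values with real coefficients $w^L_i$. My plan is to approximate every such weight by a rational number $k_i/N$ with a single common denominator $N$, to extract the factor $1/N$ as the global scaling $\alpha\in\Qbb$, and to realize the resulting integer coefficients $k_i$ by duplicating each last-hidden-layer neuron $o^{L-1}_i$ exactly $|k_i|$ times with outgoing binary weight $\operatorname{sign}(k_i)$.

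More concretely, I would first use the density of $\Qbb$ in $\Rbb$ to choose, for the prescribed $\eps>0$, a positive integer $N$ and integers $k_1, \dots, k_{n_{L-1}}$ with $|w^L_i - k_i/N| < \eps/n_{L-1}$ for every $i$, and set $\alpha := 1/N$. In $\widetilde\Ocal_{\BNN}$ I would then replace each neuron $o^{L-1}_i$ by $|k_i|$ identical copies, each inheriting the incoming binary weights $w^{L-1}_{t,i}$ and the threshold $b^{L-1}_i$ of the original, so that every copy produces exactly the value $\phi^{L-1}_i(\boldsymbol{x})$. All $|k_i|$ copies would be attached to the output summation node by the binary weight $\operatorname{sign}(k_i)\in\{\pm 1\}$, and the scaling factor $\alpha$ would be applied after the sum. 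The degenerate case $k_i = 0$ is handled either by simply dropping that neuron or by retaining two copies with opposite output weights that cancel; either option preserves full connectivity among the remaining neurons.

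To verify the error bound, I would use $|\phi^{L-1}_i(\boldsymbol{x})| = 1$ and the triangle inequality to obtain
\begin{equation*}
\left|\Ocal_{\BNN}(\boldsymbol{x}) - \widetilde\Ocal_{\BNN}(\boldsymbol{x})\right| = \left|\sum_i \left(w^L_i - \tfrac{k_i}{N}\right)\phi^{L-1}_i(\boldsymbol{x})\right| \leq \sum_i \left|w^L_i - \tfrac{k_i}{N}\right| < \eps,
\end{equation*}
uniformly in $\boldsymbol{x}\in X$, which is precisely~\eqref{eq:thm:real_weights_to_scaling}. Each duplicated copy carries the full set of incoming binary weights from layer $L-2$ and contributes a single outgoing binary weight, so full connectivity is preserved whenever $\Ocal_{\BNN}$ is fully connected, and the number of hidden layers remains unchanged. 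I expect the main obstacle to be conceptual rather than computational: the single scaling factor $\alpha$ must simultaneously rationalize \emph{all} output weights $w^L_i$, and the common-denominator step is exactly what makes this possible, at the cost of widening the last hidden layer by roughly $\sum_i |k_i|$ neurons, which is harmless since the theorem imposes no width bound.
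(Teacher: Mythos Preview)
Your proposal is correct and follows essentially the same route as the paper: rationalize the output weights, extract a common denominator as the scaling factor $\alpha$, and realize each integer numerator by duplicating the corresponding last-hidden-layer neuron with the appropriate $\pm1$ outgoing weight. You in fact streamline the paper's two-step argument by choosing a common denominator $N$ up front rather than first picking arbitrary rationals and then clearing denominators, and you are slightly more careful than the paper in handling the degenerate case $k_i=0$.
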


\begin{proof}
    We denote by $\boldsymbol W^L = (w^L_1, \dots, w^L_{n_{L-1}})^\top \in \Rbb^{n_{L-1}}$ the weights of the output layer of $\Ocal_{\BNN}$.
    The proof is divided in two steps. 
    At first we show that there exists a binarized neural network $\widehat\Ocal_{\BNN}$, which has only rational valued weights $\boldsymbol{ \widehat{W}}^L = (\widehat w^L_1, \dots, \widehat w^L_{n_{L-1}})^\top \in \Qbb^{n_{L-1}}$ for the output layer, such that $\sup_{\boldsymbol{x} \in X} \left|\widehat\Ocal_{\BNN}(\boldsymbol{x}) - \Ocal_{\BNN}(\boldsymbol{x})\right| < \eps$.
    In a second step we define a BNN $\widetilde \Ocal_{\BNN}$ with only binarized weights $\boldsymbol{\widetilde{W}}^L \in \{\pm1\}^{n_{L-1}}$ in the output layer, such that $\widehat\Ocal_{\BNN}(\boldsymbol{x}) = \widetilde\Ocal_{\BNN}(\boldsymbol{x})$ for all $\boldsymbol{x}\in X$.
    
    \textbf{Step 1:} Construction of $\widehat\Ocal_{\BNN}$.
    Since the activation function, as defined in Equation~\eqref{eq:activation_fct}, is binarized and there are only finitely many neurons on the $(L-1)$-th layer, we know that the image of the function $o^{L-1} : X \to \Rbb^{n_{L-1}}$ which computes the output of the last hidden layer is a finite set.
    We denote by $K := \text{Im}(o^{L-1}) = \{ \pm 1 \} \subset \Rbb^{n_{L-1}}$ the finite image of the last hidden layer.
    Moreover, since $\Qbb$ is dense in $\Rbb$, we can choose rational values $\widehat{w}^L_i \in \Qbb$ for all $i \in \{1,\dots,n_{L-1}\}$ such that $|w^L_i - \widehat w^L_i| < \frac{\eps}{\max(K)\cdot n_{L-1}}$.
    The BNN $\widehat\Ocal_{\BNN}$ is obtained by replacing the weights in the output layer of $\Ocal_{\BNN}$ by $\boldsymbol{\widehat{W}}^L = (\widehat w^L_1, \dots, \widehat w^L_{n_{L-1}})^\top$.
    This leads to 
    \begin{align}
        &\left|\Ocal_{\BNN}(\boldsymbol{x}) - \widehat\Ocal_{\BNN}(\boldsymbol{x})\right| \notag
        \\&= |{o^{L-1}(\boldsymbol\phi^{L-1}(\boldsymbol{x}))^\top \cdot \boldsymbol W^L - o^{L-1}(\boldsymbol \phi^{L-1}(\boldsymbol{x}))^\top \cdot \widehat{\boldsymbol{W}}^L}| \notag
        \\&\leq \sum_{i=1}^{n_{L-1}} |o^{L-1}(\boldsymbol\phi^{L-1}(\boldsymbol{x})) \cdot (w_i^L - \widehat w_i^L)| \notag
        \\&\leq \sum_{i=1}^{n_{L-1}} \max(K) \cdot  |w_i^L-\widehat w_i^L| \notag
        \\&< \sum_{i=1}^{n_{L-1}} \max(K) \cdot  \frac{\eps}{\max(K)\cdot n_{L-1}} = \eps
    \end{align}
    for any $\boldsymbol{x} \in X$.
    More specifically, $\widehat\Ocal_{\BNN}$ approximates $\Ocal_{\BNN}$ as required.
    
    \textbf{Step 2:} Construction of $\widetilde\Ocal_{\BNN}$.
    Since the weights $\widehat w^L_i$ of $\widehat\Ocal_{\BNN}$ are rational numbers, we can write them as $\widehat w^L_i = \frac{z_i}{q_i}$ with $z_i\in \Zbb$ and $q_i \in  \Nbb $. 
    We bring all $\widehat w^L_i$ to the same denominator, with 
    \begin{equation*}
        \widehat w^L_i = \frac{z_i\prod_{j\neq i} q_j}{\prod_{j} q_j}.
    \end{equation*}
    We define $\widehat{z}_i:= z_i\prod_{j\neq i} q_j$ and $q:=\prod_{j} q_j$, i.e., $\widehat w^L_i = \frac{\widehat z_i}{q}$, and choose $\alpha := \frac{1}{q}$ as the scaling factor after the output.
    To obtain $\widetilde\Ocal_{\BNN}$, we replace the $i$-th neuron on the $(L-1)$-th layer in $\widehat\Ocal_{\BNN}$ by $|\widehat{z}_i|$ many copies of the neuron for all $1 \le i \le n_{L-1}$ and set the weights from each of these neurons to the output neuron to $1$ if $\widehat{z}_i\geq 0$ and $-1$ if $\widehat{z}_i < 0$.
    The resulting BNN $\widetilde\Ocal_{\BNN}$ has $\sum_{i=1}^{n_{L-1}}|\widehat{z}_i|$ neurons on the last hidden layer and the weights $\boldsymbol{\widetilde{W}}^L$ in the output layer are all binarized.
    With the scaling factor $\alpha = \frac{1}{q}$, $\widetilde\Ocal_{\BNN}(x)$ coincides with the BNN $\widehat\Ocal_{\BNN}$, i.e., $\widetilde\Ocal_{\BNN}(\boldsymbol{x}) = \widehat\Ocal_{\BNN}(\boldsymbol{x})$ for all $ \boldsymbol{x} \in X$.

    Please note that the transformation from $\Ocal_{\BNN}$ to $\widehat\Ocal_{\BNN}$ as well as to $\widetilde\Ocal_{\BNN}$ preserves the property to be fully connected since we only exchange weights and duplicate or remove neurons together with their weights.
\end{proof}

\begin{remark}
In Theorem \ref{thm:real_weights_to_scaling}, the assumption of considering activation functions of the type as in $\eqref{eq:activation_fct}$ can be relaxed. Instead, we can consider any activation function that is suitable for a BNN, but restrict the attention to pre-last-hidden layer activation functions $\sigma^{L-1}$ being of the type: 
\begin{align}\label{alternativeCond}
    \sup\limits_{t} |\sigma^{L-1}(t)| < \infty.
\end{align}
This will immediately result to $\sup\limits_{\boldsymbol{x}} |o^{L-1}(\boldsymbol{\phi^{L-1}}(\boldsymbol{x}))| < \infty$ leading to the desired result. Note that the feature input space does not have to be compact. However, the latter scenario is also covered by the alternative condition~\eqref{alternativeCond}. This is because assuming that the input space is compact and $\sigma^{L-1}$ is continuous, the condition automatically results. Hence, condition \eqref{alternativeCond} is wider. 
\end{remark}

Moreover every neural network that has an $\Rbb$-valued scaling factor after the neuron output can be transformed into a neural network without a scaling factor, by multiplying the scaling factor with every weight of the output layer.
As a consequence, the weights of the output layer might become real valued.

Based on the results of this section, the counterexample from Section~\ref{sec:1_fc_real} and the UA theorems from Section~\ref{sec:1_fc_bin} and Section~\ref{sec:2_fc_real} are also valid for neural networks with only binary weights (even on the output layer) and a scaling factor.

\section{Conclusion}
\label{sec:conclusion}
Binarized neural networks (BNN) with binarized weights and activations offer significant reductions in memory demand and inference latency, which makes them a viable option for resource constrained systems. In this work, we focused on the theoretical foundations of fully connected BNNs, regarding the universal approximation (UA) property of BNNs. 

We first proved that single-hidden-layer fully connected BNNs with binarized inputs admits the universal approximation (UA) property. We then extended this setting to inputs in $\Rbb$ and showed by a counterexample that one hidden layer is not sufficient for admitting the UA property of FC-BNNs. Finally, we extended our considerations to two-hidden-layer FC-BNNs and proved that the UA property holds in this case. Therefore, FC-BNNs with real inputs require at least two hidden layers for admitting UA properties. In the future, we will use the insights and analyses in this work to optimize BNN structures and extend the properties to statistical learning issues in classification and regression such as consistency results.

\section*{Acknowledgement}
This work has been supported by Deutsche Forschungsgemeinschaft (DFG), as part of Sus-Aware (Project No. 398602212), and the collaborative research center SFB876, subproject A1. The work of Burim Ramosaj is supported by the Ministry of Culture and Science of the state of NRW (MKW NRW) through the research grant \textit{KI-Starter}. 

\bibliographystyle{plain}
\bibliography{citations}

\end{document}